\newtheorem{theorem}{Theorem}
\newtheorem{proposition}[theorem]{Proposition}
\newcommand{\mbf}[1]{\mathbf{#1}}
\newcommand{\sbf}[1]{\boldsymbol{#1}}
\title{A Solution for Large Scale Nonlinear Regression with High Rank
and Degree at Constant Memory Complexity via Latent Tensor Reconstruction}
\author{
  Sandor Szedmak \\ 
  Department of Computer Science\\
  Aalto University\\
  Espoo, Finland \\
 \texttt{sandor.szedmak@aalto.fi} \\
  \And
  Anna Cichonska \\ 
  Department of Computer Science \\
  Aalto University \\
  Espoo, Finland \\
  \texttt{anna.cichonska@aalto.fi} \\
  \And
  Heli Julkunen \\ 
  Department of Computer Science\\
  Aalto University\\
  Espoo, Finland \\
 \texttt{heli.julkunen@aalto.fi} \\
  \And
  Tapio Pahikkala \\ 
  Department of Computer Science\\
  University of Turku\\
  Turku, Finland \\
 \texttt{aatapa@utu.fi} \\
  \And
  Juho Rousu \\ 
  Department of Computer Science\\
  Aalto University\\
  Espoo, Finland \\
 \texttt{juho.rousu@aalto.fi} \\
}
\begin{document}

\maketitle

\begin{abstract}
This paper proposes a novel method for learning highly nonlinear,
multivariate functions from  examples. Our method takes advantage of
the property that continuous functions can be approximated by
polynomials, which in turn are representable by tensors.  
Hence the function learning problem is transformed into a tensor
reconstruction problem, an inverse problem of the tensor 
decomposition. Our method incrementally builds up the unknown tensor 
from rank-one terms, which lets us control the complexity of the
learned model and reduce the chance of overfitting. For learning the
models, we present an efficient gradient-based algorithm that can be
implemented in linear time in the sample size, order, rank of
the tensor and the dimension of the input. In addition to regression,
we present extensions to classification, multi-view learning and
vector-valued output as well as a multi-layered formulation. 
The method can work in an online fashion via processing mini-batches of
the data with constant memory complexity. Consequently,
it can fit into systems equipped only with limited resources
such as embedded systems or mobile phones.        
Our experiments demonstrate a favorable accuracy and running time
compared to competing methods. 
\end{abstract}

\section{Introduction}

The rise of deep learning methods have showed that complex,
nonlinear problems can be learned in practice if the model families
have sufficient capacity to cover the underlying
interdependence. Extending the domain of known approaches further 
could increase our capability to tackle new challenging
problems. In this paper, we study the problem of learning noisy, highly
nonlinear multivariate functions from large data sets ($> 100000$
examples). We assume that the target functions to be learned are real
valued and their domain is a bounded subset of finite dimensional
Hilbert space. The corresponding training data contains a set of
input-output pairs, points of the Hilbert space and the function
values (scalar or vector) in those points. 
To construct the estimates of the functions, we further assume that they are
continuous or can be approximated by a continuous function with a
fixed $L_{\infty}$ norm based tolerance. The continuity assumption
allows us to exploit the  Stone-Weierstrass theorem and its
generalizations \cite{MR0385023}, namely those function can be
approximated  by polynomials on a compact subset with an  
accuracy not worse than a given arbitrary small error. Since every
polynomial can be represented by tensors, the function learning problem
can be formulated as a tensor reconstruction problem. This reconstruction
is a certain inverse of the tensor decomposition, i.e. HOSVD, in which
a tensor is built up, for example, from one-rank tensors, see methods
such as CANDECOMP/PARAFAC tensor decomposition by
\cite{Kolda09tensordecompositions},  \cite{Lathauwer2000}, or
\cite{Golub2013}.     

The tensor reconstruction approach allows to reduce the number of
parameters required to represent the polynomials. An arbitrary multivariate
polynomial defined on the field of real numbers can be described by
$\binom{n+d}{n}$ parameters, where $n$ is the number of variables, $d$
is the maximum degree. Thus the complexity relating to the size of the
underlying tensor is $O(n^d)$, which increases exponentially in the
number of parameters. A properly chosen variant of the tensor
decomposition can significantly reduce that complexity. To this end, we
consider polynomials of this type   
\begin{equation} 
f(\mbf{x}) = \sum_{t=1}^{n_t} \prod_{d=1}^{n_d} \ell_d^{(t)}(\mbf{x}),\
\mbf{x}\in \mathcal{X}\subseteq \mathbb{R}^{n},
\end{equation}
where $\ell_d^{(t)}(\mbf{X})$ for all $d$ and $t$ are linear forms.
This representation is also called polynomial factorization
or decomposition problem \cite{Kaltofen:1990:CPG:77763.77768}.
The main difference between the HOSVD and the linear form
factorization is that in the former the orthogonality of the
vectors defining the linear forms is generally assumed, but in the
factorization, 
only the linear independence is expected. A significant advantage of
the linear form based representation is that the polynomial function
depends only linearly on its parameters. The basic model of this paper is
outlined in Table \ref{table:Ttof_StoT}.

The motivation of the learning approach presented
in this paper comes from several  sources. One of the influencing
models is the {\it Factorization machine (FM)} which applies a special
class, the symmetric polynomials, see \cite{zbMATH00967945}, to
approximate the underlying, unknown nonlinear functions. It was introduced by
\cite{Rendle:2010:FM:1933307.1934620} and later on, further extensions
have been published, such as the higher order case (HOFM)   
by \cite{Blondel:2016:HFM:3157382.3157473}. The FM itself also has
close connection to the Polynomial Networks,
\cite{Livni:2014:CET:2968826.2968922}, as highlighted by 
\cite{Blondel:2016:PNF:3045390.3045481}. Our approach can be viewed as a
reformulation of factorization machines, removing the limiting
assumption of the symmetry imposed on the polynomials, and hence
extending the range of learnable functions. 
Furthermore, the latent tensor models
\cite{ESAllman2009,JMLR:v15:anandkumar14b,JMLR:v16:huang15a} also
provide important background to the 
proposed method. In those approaches, the moments, up to third order, of
the input data are exploited to yield the tensor structure. In our
method, the decomposition of the latent tensor is conditioned on the
output data as well, thus it is also a cross-covariance based method.


\begin{table}[t]
\begin{equation*}
\renewcommand{\arraystretch}{1.2}
\begin{array}{c|c}
\text{Polynomial} & \text{Tensor} \\
\text{from tensor} & \text{from sample examples} \\ \hline \hline
\begin{array}{c}
\multicolumn{1}{l}{\text{Given}:\ \mbf{T}\ \text{tensor},\ \mbf{x}} \\
 \multicolumn{1}{l}{\text{Output}:\ y} \\ \hline
\\
\quad \mbf{T} \ \qquad \mbf{x} \\
\quad \downarrow \ \ \qquad \downarrow\\
f(\mbf{x}) = \braket{\mbf{T}, \otimes^{n_d} \mbf{x}} \Rightarrow y \\
\end{array}
&
\begin{array}{c}
\multicolumn{1}{l}{{\text{Given}:}\ \{(y_i,\mbf{x}_i)|i=1,\dots,m\}} \\ 
\multicolumn{1}{l}{{\text{Output}:}\ \mbf{T}\ \text{tensor}} \\
\hline
\\ 
\{y_i\} \qquad \qquad \{\mbf{x}_i \} \\
\downarrow \quad \qquad \qquad \downarrow \\
\min_{\mbf{T}} \sum_i ||y_i-\braket{\mbf{T},\otimes^{n_d}
\mbf{x}_i}||^2 \Rightarrow \mbf{T} 
\end{array}
\end{array}
\renewcommand{\arraystretch}{1.0}
\end{equation*}
\caption{The general scheme of Latent Tensor Reconstruction based regression}
\label{table:Ttof_StoT}
\end{table}

The latent tensor reconstruction (LTR) based function learning can be
interpreted as a certain algebraic alternative of a multi-layered Deep
Artificial Neural Network (DNN). In DNN, the linear subproblems are
connected by  nonlinear activation functions, e.g. sigmoid, into a
complete learning system \cite{Goodfellow:2016:DL:3086952}. By
contrast, in LTR, the nonlinear subproblems are integrated by simple
linear transfer functions. The latter model satisfies the basic
principle of the dynamic programming  \cite{Bellman1957}, that is, if
there are $N$ subproblems processed sequentially, then after computing
$n<N$ subproblems, the optimization of subproblem $n+1$ does not
influence the optimum solution computed on the previous $n$ ones. This
fact eliminate the need of backpropagation generally required in the
training of an DNN. The LTR model can also be applied as a special
type of recurrent module embedded into a  DNN based learner. In this
paper we focus only on standalone applications.  


Our contributions are summarized in the following points: 

- LTR incrementally estimates the unknown function, and
monotonically reduces the approximation error. This allows
to control the exploration of the hidden structure of the function
and the underlying tensor as well.

- The proposed method provides a nonlinear, polynomial regression
with computational complexity linear in the order of tensor $n_d$, in
its rank, $n_t$, in the sample size, $m$, and in the number of variables $n$. 
The LTR is well suited to learn functions whose variables are interrelated.

- By applying mini-batch based processing, LTR also has constant
memory complexity similarly to DNN learners. The parameters are stored
$O(n_t n_d n)$ memory in the iterative procedure, and in each step
only a fixed size subset, a mini-batch, of the data is required. 

{\bf Notation and conventions:}
In the text $\otimes$ is used to denote the  tensor product of
vectors, $\braket{,}$ and $||\ ||$ stand for inner product and 
the induced norm in a Hilbert space $\mathcal{H}$. The notation $\braket{,}$ is
also applied for the Frobenius inner product of tensors, $\circ$
denotes the pointwise product of tensors with the same shape of any order. The
pointwise $k$ power of a vector or a matrix $\mbf{x}$ is given by
$\mbf{x}^{\circ k}$. The trace of square matrix $\mbf{A}$ is denoted by
$\text{tr}(\mbf{A})$, and $\text{vec}(\mbf{A})$ stands for the column
wise vectorization of matrix $\mbf{A}$. $\mbf{1}_m$ denotes a vector
of $m$ dimension with all components equal to $1$. The word {\it
polynomial} can also mean {\it polynomial function} depending on the
context. 


\section{Background}

\subsection{Data representation}

In the learning problem  we have a sample of examples given by
input-output pairs $\mathcal{S}=\{ (\mbf{x}_i,\mbf{y}_i) |
i=1,\dots,m,\ \mbf{x}_i \in \mathbb{R}^{n},\ y_i\in \mathbb{R}^{n_y}\}$ taken from
an unknown joint distribution of input and output sources. The rows of
the matrix $\mbf{X}\in \mathbb{R}^{m \times n}$ contain the vectors
$\mbf{x}_i$, and similarly the rows of $\mbf{Y}$ hold the output
vectors, $\mbf{y}_i$, for all $i$. In the first part of the paper we
deal with the case where $n_y=1$, and in Section
\ref{sec:vector_valued} the extension to the vector valued case is presented.          



\subsection{Polynomials and Tensors}

\label{sec:polynomials}

{\it Polynomials} are fundamental tools of the computational
algebra. Let $x_1,\dots x_{n_d}$ be a set of variables which can take
values of a field $\mathcal{K}$, in our case from $\mathbb{R}$. A monomial is a 
product $x_1^{\alpha_1}\cdot \ldots \cdot x_{n_d}^{\alpha_{n_d}}$ where the
powers are non-negative integer numbers. With the help of $n_d$-tuples
$\sbf{\alpha}=(\alpha_1,\dots, \alpha_{n_d})$ and
$\mbf{x}=(x_1,\dots,x_{n_d})$ a monomial can be written as
$\mbf{x}^{\sbf{\alpha}}$, thus the $\sbf{\alpha}$ can be applied as a
fingerprint of the monomial. Let $\mathcal{P}$ be a finite set of tuples  
with type of $\sbf{\alpha}$, then a polynomial $f$ is a finite 
linear combination of monomials defined on the same set of variables,
$f=\sum_{\sbf{\alpha}\in \mathcal{P}} C_{\sbf{\alpha}} \mbf{x}^{\sbf{\alpha}}$, where
$C_{\sbf{\alpha}}\in \mathbb{R}$. 
The degree of a monomial is the sum of the powers of its variables, $\sum_{d=1}^{n_d}\alpha_d$. A degree of a polynomial is the maximum degree of the monomials contained. A polynomial is homogeneous if all of its monomials have the same degree.  

A non-homogeneous polynomial with degree $n_d$ can be derived from a
homogeneous one of degree $n_d+1$ by substituting a real number, e.g. $1$ into
one of the variables, thus $(x_0,x_1,\dots,x_{n_d})$ is transformed into
$(1,x_1,\dots,x_{n_d})$. Based on this fact in the sequel we assume
that the polynomials are homogeneous.      

Let $\mathcal{X}_k=\mathbb{R}^{n_k},k=1,\dots,n_d$ be a set of finite
dimensional real vector spaces, and for each $k$ the set
$\mathcal{X}^{*}_k$ denotes the dual of $\mathcal{X}_k$, the space of
linear functionals acting on 
$\mathcal{X}_k$. A tensor $\mbf{T}$ as a multilinear form of order
$n_d$ can be defined as an element of $\otimes^{n_d}_k 
\mathcal{X}^*_k$. If a basis is given in each of $\mathcal{X}^*_k$,
then $\mbf{T}$ can be represented by an $n_d$-way array
$[T_{j_1,\dots,j_{n_d}}]_{j_1=1,\dots,j_{n_d}=1}^{n_1,\dots,n_{n_d}}$. In
the sequel, we might also refer to the $n_d$-way arrays as tensor as well. 
The tensors form a vector space of dimension $\prod_{k=1}^{n_d} n_k$. 

We can write the multilinear function, $f: \otimes^{n_d}_{k=1}
\mathcal{X}_k \rightarrow \mathbb{R}$, as
\begin{equation}
\begin{array}{ll}
\displaystyle f(\mbf{x}_1,\dots,\mbf{x}_{n_d}|\mbf{T}) 
=\sum_{j_1=1,\dots,j_{n_d}=1}^{n_1,\dots,n_{n_d}}
T_{j_1,\dots,j_{n_d}} x_{1,j_1}\dots x_{n_d,j_{n_d}}.
\end{array}
\end{equation}

If it is not stated otherwise, in this paper we generally assume that
the vector spaces $\mathcal{X}_k,k=1,\dots,n_d$ are the same and
$\mathcal{X}=\mathbb{R}^{n}$, thus the indexes of vector spaces can be
dropped 
\begin{equation}
\label{eq:tensor_poly}
f(\mbf{x}|\mbf{T})=\sum_{j_1=1,\dots,j_{n_d}=1}^{n}T_{j_1,\dots,j_{n_d}}
x_{j_1}\dots x_{j_{n_d}}.
\end{equation}
Tensor $\mathscr{T}$ is symmetric if for
any permutation $\sigma$ of the indexes $j_1,\dots,j_{n_d}$ the
identity $T_{j_1,\dots,j_{n_d}}=T_{j_{\sigma(1)},\dots,j_{\sigma(n_d)}}$
holds. In some cases this symmetry is called super-symmetry, see
\cite{Kolda09tensordecompositions}.  

The space of symmetric tensors of order $n_d$ defined on
$\mathcal{X}$ as vector space is isomorphic to the space of homogeneous
polynomials of degree $n_d$ defined on $\mathcal{X}$. 
Therefore any homogeneous polynomial with variables $x_1,\dots,x_{n_d}$ and with 
degree $n_d$ as multilinear form defined over the field of real numbers can
be represented by the help of a symmetric tensor.   

\subsection{Representation of multilinear functions}

The tensor $\mbf{T} \in \otimes^{n_d} \mathbb{R}^{n}$ may be
given in a decomposed form \cite{Lathauwer2000,Kolda09tensordecompositions}   
\begin{equation}
\begin{array}{ll}
\mbf{T}& = \displaystyle \sum_{t=1}^{n_t} \lambda_{t} \mbf{p}^{(t)}_1 \otimes \dots \otimes
\mbf{p}^{(t)}_{n_d}, \\
\text{s.t.} & ||\mbf{p}^{(t)}_d||=1,\ \mbf{p}^{(t)}_d\in
\mathbb{R}^{n}, \\
& t=1,\dots,n_t,\ d=1,\dots,n_d.
\end{array}
\end{equation}
This representation is generally not unique, see for example 
\cite{Lathauwer2000,deSilva:2008:TRI:1461964.1461969}.  
By replacing $\mbf{T}$ with its decomposed form, the polynomial 
function of (\ref{eq:tensor_poly}) turns into the following expressions 
\begin{equation} 
\label{eq:function_dot_product}
\begin{array}{ll}
f(\mbf{x})& =  \displaystyle \sum_{t=1}^{n_t} \lambda_{t} \braket{\mbf{p}^{(t)}_1 \otimes \dots \otimes
  \mbf{p}^{(t)}_{n_d}, \mbf{x} \otimes \dots \otimes
  \mbf{x}  }.  \\
& = \displaystyle \sum_{t=1}^{n_t} \lambda_{t} \braket{\mbf{p}^{(t)}_1,\mbf{x}} \cdot \dots\:
  \cdot  \braket{\mbf{p}^{(t)}_{n_d},\mbf{x}},
\end{array}
\end{equation}
where we exploit the well known identity connecting the inner product
and the tensor products, \cite{Golub2013}. This form only 
consists of terms of scalar factors, where each scalar is the value of
a linear functional acting on the space $\mathcal{X}$. This transformation
eliminates the potential difficulties which could arise in working
directly with full tensors. Observe that the function $f$ is linear in
each of the vector valued parameters, $\mbf{p}_{d}^{(t)},t=1,\dots,n_t,\
d=1,\dots,n_d$. 

\subsection{Polynomial regression, the generic form}

We start on a generic form of multivariate polynomial regression
described by a tensor. Let the degree of the polynomial be
equal to $n_d$, then by the help of (\ref{eq:tensor_poly}) we can write  
\begin{equation} 
\begin{array}{ll}
\min & \displaystyle \sum_{i=1}^{m}\| y_i -
f(\mbf{x}|\mbf{T}) \|^2 \\
\text{w.r.t.} & \displaystyle \mbf{T}\in \otimes^{n_d} \mathbb{R}^{n},\ j_1,\dots,j_{n_d} \in \{1,\dots,n\}. 
\end{array} 
\end{equation}
Here the polynomial is assumed to be homogeneous, for the inhomogeneous case see Section
\ref{sec:polynomials}. Representing functions by polynomials is an
attractive approach supported by the Stone-Weierstrass type theorems,
\cite{MR0385023}. Those theorems connect continuous functions to 
polynomials. A general form of those theorems is given here for real,
separable Hilbert spaces \cite{PMPrenter1970},
\begin{theorem}Let $\mathcal{H}$ be a real, separable Hilbert space. The
family of continuous polynomials on $\mathcal{H}$, restricted to a
compact set $\mbf{K} \subset \mathcal{H}$, is dense in the set
$\mathcal{C}(\mathcal{K})$ of continuous functions mapping
$\mathcal{H}$ into $\mathcal{H}$, and restricted to $\mathcal{K}$,
where $\mathcal{C}(\mathcal{K})$ carries the uniform norm topology.  
\end{theorem}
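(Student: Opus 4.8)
The plan is to deduce the theorem from the classical Stone--Weierstrass theorem on a compact Hausdorff space, via a two-step reduction: first I would reduce the $\mathcal{H}$-valued approximation problem to an essentially finite-dimensional (hence scalar) one using the compactness of $\mathcal{K}$, and then I would exhibit a point-separating subalgebra of $\mathcal{C}(\mathcal{K},\mathbb{R})$ whose members are restrictions of continuous polynomials.

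For the first step, fix $f\in\mathcal{C}(\mathcal{K})$ and $\epsilon>0$. Since $\mathcal{K}$ is compact and $f$ continuous, $f(\mathcal{K})$ is compact, hence totally bounded, in $\mathcal{H}$; I would pick a finite $\tfrac{\epsilon}{2}$-net $\{f(\mbf{x}_1),\dots,f(\mbf{x}_N)\}$ of $f(\mathcal{K})$, put $V=\operatorname{span}\{f(\mbf{x}_1),\dots,f(\mbf{x}_N)\}$, and let $P_V$ be the orthogonal projection of $\mathcal{H}$ onto $V$. The best-approximation property of $P_V$ then gives, for each $\mbf{x}\in\mathcal{K}$, an index $j$ with $\|f(\mbf{x})-P_Vf(\mbf{x})\|\le\|f(\mbf{x})-f(\mbf{x}_j)\|<\tfrac{\epsilon}{2}$. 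Fixing an orthonormal basis $\mbf{e}_1,\dots,\mbf{e}_N$ of $V$, I would write $P_Vf(\mbf{x})=\sum_{k=1}^N g_k(\mbf{x})\,\mbf{e}_k$ with $g_k=\braket{f(\cdot),\mbf{e}_k}\in\mathcal{C}(\mathcal{K},\mathbb{R})$.

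For the second step, let $\mathcal{A}\subset\mathcal{C}(\mathcal{K},\mathbb{R})$ be the algebra generated by the constant functions and by the restrictions to $\mathcal{K}$ of the continuous linear functionals on $\mathcal{H}$. Every element of $\mathcal{A}$ is a finite linear combination of finite products of continuous linear functionals, and a product of $k$ such functionals equals a bounded symmetric $k$-linear form evaluated on the diagonal (obtained by symmetrizing over permutations of the arguments); hence the members of $\mathcal{A}$ are restrictions of continuous polynomials, and $\mathcal{A}$ is closed under sums and products. Moreover $\mathcal{A}$ separates the points of $\mathcal{K}$: if $\mbf{u}\neq\mbf{v}$, the linear functional $\mbf{z}\mapsto\braket{\mbf{z},\mbf{u}-\mbf{v}}$ takes distinct values at $\mbf{u}$ and $\mbf{v}$. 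By the Stone--Weierstrass theorem on the compact space $\mathcal{K}$, $\mathcal{A}$ is uniformly dense in $\mathcal{C}(\mathcal{K},\mathbb{R})$, so each $g_k$ can be approximated by a continuous polynomial $q_k$ with $\|g_k-q_k\|_\infty<\epsilon/(2\sqrt{N})$.

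To finish, set $P(\mbf{x})=\sum_{k=1}^N q_k(\mbf{x})\,\mbf{e}_k$, a continuous $\mathcal{H}$-valued polynomial on $\mathcal{H}$. For every $\mbf{x}\in\mathcal{K}$,
\begin{equation*}
\|f(\mbf{x})-P(\mbf{x})\|\le\|f(\mbf{x})-P_Vf(\mbf{x})\|+\Big(\textstyle\sum_{k=1}^N|g_k(\mbf{x})-q_k(\mbf{x})|^2\Big)^{1/2}<\tfrac{\epsilon}{2}+\sqrt{N}\cdot\frac{\epsilon}{2\sqrt{N}}=\epsilon ,
\end{equation*}
which gives the asserted density. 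The hard part will be the first step: passing from an $\mathcal{H}$-valued target to an essentially finite-dimensional one is exactly where the compactness of $\mathcal{K}$ is indispensable, since it forces $f(\mathcal{K})$ to be totally bounded and hence uniformly close to a finite-dimensional subspace; once this reduction is made, the scalar case is the standard Stone--Weierstrass argument, the only subtlety being the verification that the continuous polynomials contain a point-separating subalgebra that also contains the constants.
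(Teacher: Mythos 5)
Your proof is correct, but note that the paper itself offers no proof of this statement at all: it is quoted as a known result of Prenter \cite{PMPrenter1970}, so there is no internal argument to compare against. Your two-step reduction is a legitimate self-contained derivation: compactness of $\mathcal{K}$ forces $f(\mathcal{K})$ to be totally bounded, the orthogonal projection onto the span of a finite $\tfrac{\epsilon}{2}$-net gives a uniformly $\tfrac{\epsilon}{2}$-close map into a finite-dimensional subspace $V$ (the best-approximation property is used correctly here), and the coordinate functions $g_k=\braket{f(\cdot),\mbf{e}_k}$ are then handled by the classical Stone--Weierstrass theorem applied to the subalgebra generated by constants and restrictions of continuous linear functionals, which separates points via $\mbf{z}\mapsto\braket{\mbf{z},\mbf{u}-\mbf{v}}$ and consists of restrictions of genuine continuous polynomials (products of bounded linear functionals are diagonals of bounded $k$-linear forms). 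The final estimate $\|f(\mbf{x})-P(\mbf{x})\|<\epsilon$ is assembled correctly, and $P=\sum_k q_k(\cdot)\,\mbf{e}_k$ is indeed a continuous $\mathcal{H}$-valued polynomial on all of $\mathcal{H}$. Two minor remarks: the dimension of $V$ may be smaller than $N$, which only helps; and your argument never uses separability of $\mathcal{H}$, so you in fact prove a slightly more general statement than the one quoted — separability appears in Prenter's formulation but is not needed for this compactness-based route. What the citation-based approach of the paper buys is brevity; what your argument buys is transparency about exactly which ingredients (total boundedness of $f(\mathcal{K})$, point separation by inner products, scalar Stone--Weierstrass) drive the result.
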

Informally, this theorem states that to any continuous function $F$ there is
a polynomial $f$ to be arbitrary close $F$.


Since the tensor representing a multivariate polynomial defined on the
field of the real numbers is symmetric, the number of parameters in the
polynomial is equal to $\binom{n+N-1}{n}$ for the homogeneous
case. This number grows exponentially in the number of degree and the
variables.  To find a sounding polynomial approximation for large scale
problems we need to reduce the dimension of the parameter space in a
way which preserves 
the approximation flexibility but the computational complexity in the
number of variables, and in the degree, is linear. 

\subsection{Factorization machines}

{Factorization machines (FM)} \cite{Rendle:2010:FM:1933307.1934620} apply a special
class of polynomials, the symmetric polynomials, see \cite{zbMATH00967945},  to
estimate nonlinear functions. FMs were extended to higher order case (HOFM)  
by \cite{Rendle:2012:FML:2168752.2168771} and 
\cite{Blondel:2016:HFM:3157382.3157473}. FMs are closely related to 
the polynomial networks, see \cite{Livni:2014:CET:2968826.2968922} as highlighted by
\cite{Blondel:2016:PNF:3045390.3045481}. 

The model of factorization machine can be described via a compact, recursive form \cite{Blondel:2016:HFM:3157382.3157473} 
\begin{equation}
\label{eq:hofm_vector}
\begin{array}{l}
\displaystyle \mbf{y}(\mbf{X}) =\sum_{d=1}^{n_d} 
\left(\frac{1}{d}\sum_{r=1}^{d} (-1)^{r+1} \mbf{A}_{d-r} \circ
\mbf{D}^{(d)}_r \right)\mbf{1}_{n_t},
\end{array}   
\end{equation}
where $\mbf{A}_{d-r} \in \mathbb{R}^{m \times n_t}$ for $r=0,\dots,d$,
$(\mbf{A}_0)_{it}=1$ for all $i,t$, and $\mbf{D}^{(d)}_r=
\mbf{X}^{\circ^d}(\mbf{P}^{(t)\circ^d})^T$ for all $d,r$, and  $\mbf{P}^{(t)}\in
\mathbb{R}^{n_t\times n}$. This model can be derived from the
{\it fundamental theorem of symmetric polynomials} \cite{zbMATH00967945},
which states that every symmetric polynomial can be expressed as a polynomial of the
power sums, if the basic field of the polynomials contains the rational numbers. In the case of the FM the power sums are given as  $(\mbf{D}^{(d)})_{i,t}=\sum_{j=1}^{n} (x_{ij}p_{tj})^{d}$. 
A function (polynomial) is symmetric if its value is invariant on any
permutation of the variables, e.g. $f(x,y)=xy$. Symmetric polynomials
can well approximate symmetric functions. In the non-symmetric cases
the approximation capability of the factorization machine is limited,
see the examples in Table \ref{tbl:simplepoly_results}.

\subsection{Kernel Ridge regression}

The Kernel Ridge Regression (KRR) is a popular alternative to resolve the polynomial regression problem with small to medium-sized data. It can be stated in a matrix form
\begin{equation} 
\begin{array}{ll}
\min_{\alpha} & \|\mbf{y}- \mbf{K}\sbf{\alpha}\|^2 + C(\sbf{\alpha}^T\mbf{K}\sbf{\alpha})^2 \\
\end{array}
\end{equation}
For polynomial regression, in the KRR  problem, the kernel $\mbf{K}$ is chosen as the polynomial kernel
$ K_{ij}= (\braket{\mbf{x}_i,\mbf{x}_j}+b)^{n_d},$
where $b\in \mathbb{R}$ is the bias term. For further details of kernel-based learning, see for example \cite{ShaweTaylor2004}. The parameter space of KRR has dimension $m$ equal to the size of sample and independent from the number of variables. The latter property has an advantage when the the number of variables is large. However solving this type of problem assuming dense kernels requires $O(m^3)$ time, and $O(m^2)$ space complexity, which is a bottleneck for processing large data sets, although several approaches have been proposed to reduce those complexities, e.g. Nystr\"{o}m approximation and random Fourier features \cite{NIPS2007_3182}. 

\section{Latent tensor reconstruction}

We first construct the basic problem where the target function is assumed to be scalar valued.  
The idea of the learning task is outlined in Table
\ref{table:Ttof_StoT}. Let $f^{(t)}(\mbf{x}) = \prod^{n_d}_{d=1} \braket{\mbf{p}^{(t)}_d,\mbf{x}}$,
then the corresponding optimization problem
can be formulated similarly to the Kernel Ridge Regression by assuming
least square loss and Tikhonov type regularization of the parameters. 
\begin{equation}
\label{eq:basic_ltr}
\renewcommand{\arraystretch}{1.5}
\begin{array}{ll}
\min & \displaystyle \frac{1}{m}\sum_{i=1}^{m}||y_i-
\sum_{t=1}^{n_t}\lambda_{t} f^{(t)}(\mbf{x}) ||^2  
+ \frac{C_p}{n_t n_d n}
\sum_{t=1}^{n_t}\sum_{d=1}^{n_d}||\mbf{p}^{(t)}_d||^2\\  
\text{w.r.t.} &
\lambda_{t},\ \mbf{p}^{(t)}_1,\dots,\mbf{p}^{(t)}_{n_d},\ t=1,\dots,n_t,\\
\end{array}
\renewcommand{\arraystretch}{1}
\end{equation}
It is important to mention here that without the regularization of the parameters, the tensor one-rank approximation problem might be ill-posed for certain tensors \cite{deSilva:2008:TRI:1461964.1461969}. 
We can make an important statement about this problem. 
The objective function,  if all variables,
$\lambda_{t},(\mbf{p}^{(t)}_1,\dots,\mbf{p}^{(t)}_{n_d}),\
t=1,\dots,n_t$, except one are fixed then in that variable the 
function within the norm is linear, consequently convex. Therefore the
objective function in that variable is also convex since the norm and
the summation preserve the convexity. 

\subsection{LTR representation vs. factorization machine}

In our polynomial representation a matrix is assigned to each
variable (component) of the input vectors, namely
for a fixed index of the variables $j\in \{1,\dots,n\}$ we have $j \rightarrow
\mbf{P}_{j}=[\mbf{p}^{(t)}_1,\dots,\mbf{p}^{(t)_{n_d}}]\in \mathbb{R}^{n_d\times n_t}$. In the Factorization Machine (FM)
\cite{Blondel:2016:HFM:3157382.3157473}
a vector $p_j \in
\mathbb{R}^{n_t}$ is assigned to the variable indexed by $j$.


The assignment of the LTR allows to decouple the factors of the
polynomial, thus the formulation becomes more transparent. The linear
dependence on the parameters allows us to use a simple gradient descent
algorithm. The increased parameter capacity requires larger sample
size to properly estimate the parameters. However when the sample is
sufficiently large the LTR performs significantly better that the FM
and KRR, see the examples in Section \ref{sec:experiments}.   


\subsection{Learning algorithm}

\begin{table}[htbp]
\begin{tabular}{c|c}
\begin{minipage}{0.48\linewidth}
\begin{equation*}
\label{eq:basic_algorithm}
\renewcommand{\arraystretch}{1.2}
\begin{array}{l}
\text{\bf The basic rank wise algorithm} \\ \\

\text{\bf Given: } \text{a sample: } \{(\mbf{x}_i,y_i),\ i=1,\dots,m \}, \\
\text{Let}\ y^{(1)}_i=y_i,\ i=1,\dots,m. \\
\text{\bf For}\ t=1\ \text{\bf to}\ n_t\ \text{\bf do}  \\
\quad \text{Solve rank-one subproblem} \\ 
\quad
\fbox{ $
\begin{array}{ll}
\min & \displaystyle \sum_{i=1}^{m}||y_i^{(t)}-
\lambda_{t}f^{(t)}(\mbf{x}_i) ||^2 \\
& + \frac{C_p}{n_d n} \sum_{d=1}^{n_d}||\mbf{p}^{(t)}_d||^2\\ 
\text{w.r.t.} & \lambda_{t},\ \mbf{p}^{(t)}_1,\dots,\mbf{p}^{(t)}_{n_d}, \\
\end{array}
$}
\\
\quad \text{Optimum solution:} \\
\quad \quad  \lambda_{t}^{*},\ \mbf{p}^{(t)*}_1,\dots,\mbf{p}^{(t)*}_{n_d} \\ 
 \quad \text{Deflation of the output:} \\
 \displaystyle \quad \quad y^{(t+1)}_i=y^{(t)}_i- \lambda_{t}^{*}\prod^{n_d}_{d=1}
\braket{\mbf{p}^{(t)*}_d,\mbf{x}_i}, \forall i \\
\end{array}
\renewcommand{\arraystretch}{1}
\end{equation*}
\end{minipage}
&
\begin{minipage}{0.48\linewidth}
\begin{equation*}
\renewcommand{\arraystretch}{1.2}
\begin{array}{l}
\text{\bf Solution scheme for the Subproblem} \\ \\
\text{\bf Given: } \{\mathcal{I}_b,b=1,\dots,n_b \}\\
\quad  \text{index sets of mini-batches}, \\
n_e\ \text{ Number of epochs}, \\ 
\gamma\ \text{ Learning speed}, \\
\text{Initialize the variables: } \lambda_{t}, \left( \mbf{p}_d^{(t)}
  \right)_{d=1}^{n_d}, \\
\quad \text{the gradients:} \dfrac{\partial f}{\partial \lambda_{t}}, \left( \dfrac{\partial
  f}{\partial \mbf{p}^{(t)}_k} \right)_{d=1}^{n_d}, \text{and} \\
\quad \text{the aggregated gradients for ADAM: }
v_{\lambda},(\mbf{v}_{d})_{d=1}^{n_d},  \\ 
\text{\bf For}\ e=1\ \text{\bf to}\ n_e\ \text{\bf do} \\
\quad  \text{\bf For}\ b=1\ \text{\bf to}\ n_b\ \text{\bf do} \quad 
\text{Process mini-batches} \\ 
\qquad \text{Compute mini-batch relative gradients} \\
\qquad \text{Update aggregated gradients}, \\
\qquad \text{Update variables}\ \lambda_{t},
\mbf{p}_1^{(t)},\dots,\mbf{p}_1^{(N)}   
\end{array}
\renewcommand{\arraystretch}{1}
\end{equation*}
\end{minipage}
\end{tabular}
\caption{The basic rank-wise algorithm of the scalar valued LTR, ({\it
on the left}), and of a solution scheme of subproblem by momentum based gradient update, e.g. ADAM, ({\it on the right}).}
\label{tab:algorithms_frame}
\end{table}

The summary of the basic algorithm and the solution to the 
rank-one subproblem is outlined in Table
\ref{tab:algorithms_frame}. The basic rank-wise algorithm follows 
the scheme of a Singular Value Decomposition by adding the best
fitting rank-one layers to the tensor under construction. In the
rank-one subproblem, the 
mini-batch wise processing of the data can be interpreted as a
piece-wise polynomial approximation, and also a certain variant of the spline
methods  \cite{Friedman91multivariateadaptive}, where the splines are
connected via momentum-based \cite{BTPolyak1964,nesterov2005smooth},
gradient updates, e.g. ADAM \cite{Kingma2014}.  
 
The selection of the mini-batches could follow an online processing
scheme, or if the entire data set is available then they can be
uniformly randomly chosen out of the full set by imitating a
stochastic gradient scheme.


\subsection{Vector-valued output}
\label{sec:vector_valued}

The outputs of the learning problem might be given as vectors
$\{\mbf{y}_i | i=1,\dots,m,\ \mbf{y}_i \in \mathcal{Y} \subseteq
\mathbb{R}^{n_y}\}$. Let the matrix $\mbf{Y}\in \mathbb{R}^{m \times n_y}$
contain the vectors $\mbf{y}_i$ in its rows for all $i$. We can extend
the basic scalar valued regression (\ref{eq:basic_ltr}) into a vector
valued one  
where the boxes highlights the changes
applied on the scalar valued case.    
\begin{equation}
\label{eq:vectorapprox}
\begin{array}{@{}l@{\:}l@{}}
 \min & \displaystyle \frac{1}{m n_y}\sum_{i=1}^{m} \Big \| \fbox{$ \mbf{y}_i $}-
\sum_{t=1}^{n_t}\lambda_{t} f^{(t)}(\mbf{x}) \fbox{$ \mbf{q}^{(t)} $} \Big
\|^2 \\
& \displaystyle + \frac{C_p}{n_t n_d n}
\sum_{t=1}^{n_t}\sum_{d=1}^{n_d}||\mbf{p}^{(t)}_d||^2 + \fbox{$ \displaystyle \frac{C_q}{n_t n_y} \sum_{t=1}^{n_t}
||\mbf{q}^{(t)}||^2 $}  \\  
\text{w.r.t.} &
\lambda_{t}, \fbox{$ \mbf{q}^{t} $}\in \mathbb{R}^{n_y}, \mbf{p}^{t}_d\in \mathbb{R}^{n}, \\ & t=1,\dots,n_t,\ d=1,\dots,n_d. \\
\end{array}
\end{equation}

The terms in the loss function of the problem can be rewritten in a
tensor and in a pointwise product form as well 
\begin{equation*}
\renewcommand{\arraystretch}{1.4}
\begin{array}{l}
\left \| \mbf{y}_i-
\sum_{t=1}^{n_t} \lambda_{t}
\Braket{ \otimes_{r=d}^{n_d} \mbf{p}^{(t)}_d
,  \otimes^{n_d} \mbf{x}_i } \mbf{q}^{(t)} \right \|^2    
= \left \| \mbf{Y}- \sum_{t=1}^{n_t}\lambda_{t}
\left(\circ^{n_d}_{d=1} \mbf{X}\mbf{p}^{(t)}_d \right) \otimes
\mbf{q}^{(t)} \right \|^2. 
\end{array}
\renewcommand{\arraystretch}{1}
\end{equation*}
In the vector valued formulation the vectors $\mbf{q}^{(t)}$ play the
role of certain nonlinear principal components of the output
vectors. The coordinates of the output vectors with respect to those
principal components are expressed by the polynomials defined on the
input vectors.     

\subsection{Matrix representations}

Let the following matrices be formed from the data and the parameters:
\begin{equation}
\label{eq:ltr_variables}
\renewcommand{\arraystretch}{1.8}
\begin{array}{l|lll}
\text{Parameters} & \mbf{P}_d& \displaystyle =\left [ \mbf{p}_d^{T} \right ]_{t=1}^{n_t} & \in
\mathbb{R}^{n_t \times n}, \\ 
& \mbf{Q} & \displaystyle = \left [ \mbf{q}^{(t)T} \right ]_{t=1}^{n_t} & \in
\mathbb{R}^{n_t \times n_y}, \\ \hline
\text{Scaling factors} & \sbf{\lambda} & \displaystyle = [\lambda_t]_{t=1}^{n_t} & \in \mathbb{R}^{n_t},  \\
& \mbf{D}_{\lambda} & = \text{diag}(\sbf{\lambda}) & \in
\mathbb{R}^{n_t \times n_t}, \\ \hline
\text{Polynomial} \\
\quad \ \text{complete} & \mbf{F} & \displaystyle = \circ_{d=1}^{n_d}\mbf{X}
\mbf{P}_d^{T} & \in \mathbb{R}^{m \times n_t}, \\
\quad \ \text{without }d & \mbf{F}_{\setminus d} & \displaystyle = \circ_{k=1,k\ne d}^{n_d}\mbf{X} \mbf{P}_k^{T} & \in
\mathbb{R}^{m \times n_t}, \\ \hline
\text{Error} & \mbf{E} & =\mbf{Y} - \mbf{F}\mbf{D}_{\lambda}\mbf{Q} & \in
\mathbb{R}^{m \times n_y}.
\end{array}
\renewcommand{\arraystretch}{1}
\end{equation}
We can write Problem (\ref{eq:vectorapprox}) in a compact matrix form.
\begin{equation}
\label{eq:ltr_matrix}
\renewcommand{\arraystretch}{1.3}
\begin{array}{ll}
\min & \displaystyle \dfrac{1}{2 m n_y}\left \| \mbf{Y}- \mbf{F}\mbf{D}_{\lambda}\mbf{Q}
\right \|_F^2 
+ \dfrac{C_{p}}{2 n_tn_d n}\sum_{d=1}^{n_d}||\mbf{P}_d||_F^2 +
\dfrac{C_{q}}{2n_t n_y}||\mbf{Q}||_2^2\\   
\text{w.r.t.} &
\mbf{D}_{\lambda},\ \mbf{Q},\ \mbf{P}_1,\dots,\mbf{P}_{n_d}.
\end{array}
\renewcommand{\arraystretch}{1}
\end{equation}
Based on the notation introduced in (\ref{eq:ltr_variables}), the partial
derivatives to the problem (\ref{eq:ltr_matrix}) can be computed
in matrix form as well. Let $h$ denote the entire objective function
of (\ref{eq:ltr_matrix}), hence we can write    
\begin{equation}
\label{eq:matrix_gradient}
\renewcommand{\arraystretch}{1.4}
\begin{array}{ll}
\nabla_{\mbf{P}_d} h& \displaystyle = - (\mbf{F}_{\setminus d}^{T} \circ
\mbf{D}_{\lambda}\mbf{Q}\mbf{E}^{T})\mbf{X} +
\frac{C_{p}}{n_tn_d n}\mbf{P}_d, d=1,\dots,n_d. \\
\nabla_{\sbf{\lambda}} h & \displaystyle = - (\mbf{F}^{T} \circ
\mbf{Q}\mbf{E}^{T})\mbf{1}_m, \\
\nabla_{\mbf{Q}} h & \displaystyle = - \mbf{D}_{\lambda} \mbf{F}^{T} \mbf{E} +
\frac{C_{q}}{n_t n_y}\mbf{Q}, \\ 
\end{array}
\renewcommand{\arraystretch}{1}
\end{equation}

\subsection{Complexity of the algorithm}

\begin{figure}[tbp]
\begin{center}
  \includegraphics[width=6in,height=2in]{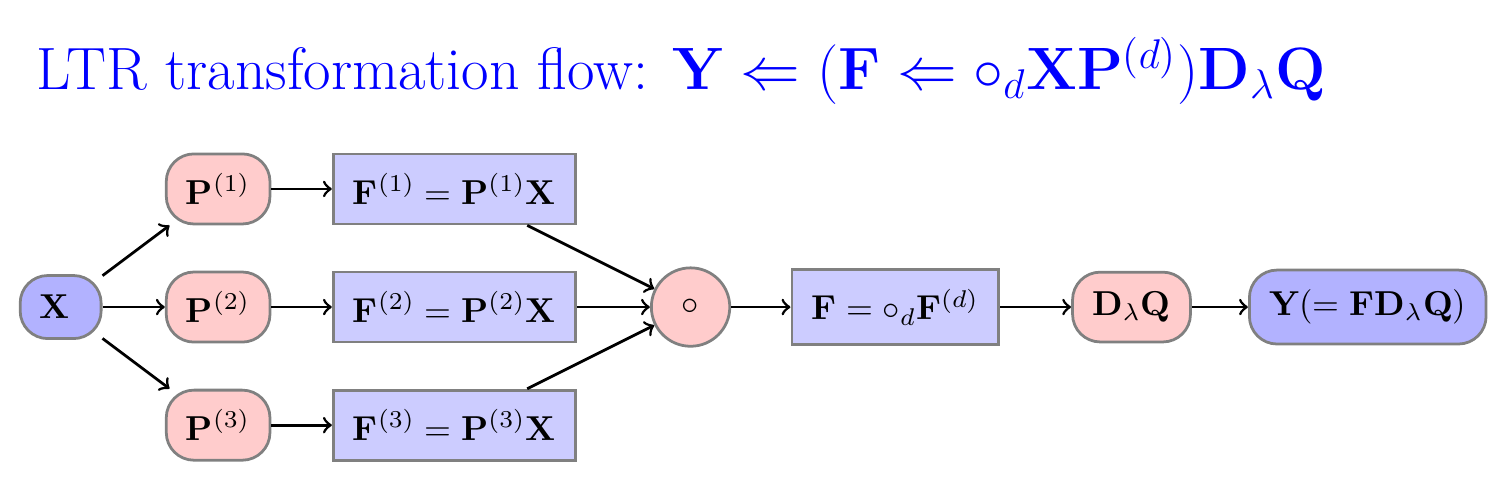}
\end{center}
\caption{The transformation flow of the vector valued, matrix
represented LTR learning problem for degree $3$. }
\label{fig:transformation_flow}
\end{figure}

In the matrix representation of the optimization problem
(\ref{eq:ltr_matrix}), the predictor function contains only a matrix
product and pointwise products of the corresponding matrices, thus the
original tensor decomposition form completely disappears from the
computation. The transformation flow sending the input matrix into the
output one is summarized in Figure \ref{fig:transformation_flow}.

From that transformation flow and from the matrix expressions of the
gradient, the time complexity of one iteration can be computed, and
we can state 
\begin{proposition} 
At fixed number of epochs the time complexity of the gradient based algorithm is $O(mnn_dn_t)$.
\end{proposition}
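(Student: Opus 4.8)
The plan is to establish the claim by tracking the cost of one full pass (epoch) over the data, broken down into (i) evaluating the predictor-related matrices appearing in Figure \ref{fig:transformation_flow}, and (ii) evaluating the gradient expressions in (\ref{eq:matrix_gradient}); since an epoch consists of processing all mini-batches whose sizes sum to $m$, it suffices to bound the cost in terms of $m$, $n$, $n_d$, $n_t$ (and $n_y$, which I will argue contributes only lower-order terms under the standing assumption that $n_y$ is small, or can be absorbed as a constant as the proposition implicitly does). First I would bound the forward pass: each product $\mbf{X}\mbf{P}_d^{T}$ is $m\times n$ times $n\times n_t$, costing $O(mnn_t)$, and there are $n_d$ of them, giving $O(mnn_dn_t)$; the pointwise product $\circ_{d=1}^{n_d}\mbf{X}\mbf{P}_d^{T}$ that forms $\mbf{F}$ then costs $O(mn_dn_t)$, and forming $\mbf{E}=\mbf{Y}-\mbf{F}\mbf{D}_{\lambda}\mbf{Q}$ costs $O(mn_tn_y)$. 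So the forward pass is $O(mnn_dn_t + mn_tn_y)$, dominated by the first term when $n_y\le n$.

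Next I would bound the backward pass using (\ref{eq:matrix_gradient}). The key observation is that each $\mbf{F}_{\setminus d}$ can be obtained from the already-computed factors $\mbf{X}\mbf{P}_k^{T}$ by a pointwise product over $k\ne d$ at cost $O(mn_dn_t)$ each (or all of them together in $O(mn_dn_t)$ via prefix/suffix products), so maintaining all $n_d$ of them is $O(mn_d^2n_t)$ at worst — but in fact a standard prefix-product trick makes it $O(mn_dn_t)$, which I would note but not belabor. For $\nabla_{\mbf{P}_d}h$: the matrix $\mbf{D}_{\lambda}\mbf{Q}\mbf{E}^{T}$ is $n_t\times m$ and costs $O(mn_tn_y)$ to form; its pointwise product with $\mbf{F}_{\setminus d}^{T}$ is $O(mn_t)$; multiplying the resulting $n_t\times m$ matrix by $\mbf{X}\in\mathbb{R}^{m\times n}$ costs $O(mnn_t)$; adding the regularizer term is $O(nn_t)$. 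Summing over $d=1,\dots,n_d$ gives $O(mnn_dn_t + mn_dn_tn_y)$. The gradients $\nabla_{\sbf{\lambda}}h$ and $\nabla_{\mbf{Q}}h$ are cheaper: $O(mn_tn_y)$ and $O(mn_tn_y + n_tn_y)$ respectively. Finally, the ADAM-style aggregated-gradient and variable updates touch each parameter a constant number of times, and the parameter count is $O(n_dn_tn + n_tn_y + n_t)$, so the update step is dominated by the gradient computation.

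Adding everything and invoking the standing convention that $n_y$ is treated as a constant (or at least $n_y = O(n)$, so that $mn_dn_tn_y$ and $mn_tn_y$ are absorbed into $O(mnn_dn_t)$), one pass costs $O(mnn_dn_t)$; at a fixed number of epochs $n_e$ this multiplies by the constant $n_e$, yielding the stated bound. I expect the only genuinely delicate point to be the handling of the $\mbf{F}_{\setminus d}$ matrices: naively recomputing each from scratch would give an extra factor of $n_d$. The fix is the prefix/suffix product observation (compute $L_k=\circ_{j\le k}\mbf{X}\mbf{P}_j^{T}$ and $R_k=\circ_{j\ge k}\mbf{X}\mbf{P}_j^{T}$ in $O(mn_dn_t)$ total, then $\mbf{F}_{\setminus d}=L_{d-1}\circ R_{d+1}$), which I would state explicitly so the linearity in $n_d$ is not in doubt; everything else is a routine accounting of matrix-product and pointwise-product costs against the dimensions declared in (\ref{eq:ltr_variables}).
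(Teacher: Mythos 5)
Your proposal is correct and follows essentially the same route as the paper's proof: a direct operation count of the matrix and pointwise products appearing in (\ref{eq:matrix_gradient}), summed over $d=1,\dots,n_d$ and multiplied by the fixed number of epochs. You are in fact somewhat more careful than the paper on two points it glosses over --- the prefix/suffix-product computation of the $\mbf{F}_{\setminus d}$ matrices (which is what rules out a spurious extra factor of $n_d$ when the degree exceeds the number of variables) and the explicit assumption that $n_y$ is treated as a constant --- but these are refinements of the same accounting argument rather than a different approach.
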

\begin{proof}
Note that the dominating part in (\ref{eq:matrix_gradient}) is to compute the matrix products. Computing $\mbf{F}$ requires $O(mnn_dn_y)$ operations,  $\mbf{F}^{T} \mbf{E}$ and $\mbf{Q}\mbf{E}^{T}$ have the complexity $O(mn_tn_y)$, and finally the product  $(\mbf{F}_{\setminus d}^{T} \circ \mbf{D}_{\lambda}\mbf{Q}\mbf{E}^{T})\mbf{X}$ has complexity $O(mnn_t)$. Since these products can be evaluated sequentially, thus the complexity of one step is $O(mnn_dn_t )$. If the number of epochs is fixed, then this also gives the overall complexity of the entire algorithm. 
\end{proof}

\subsection{Multi-view case}

The basic case can be extended to deal with multiple input sources, where
the input matrix $\mbf{X}$ is replaced with a set of matrices
$(\mbf{X}_d| \mbf{X}_d \in \mathbb{R}^{m \times n_d})$ with potentially
different number of columns. Clearly, the optimization problem
(\ref{eq:ltr_matrix}) can be solved in the same way, thus this
extension is a very natural one. To implement the multi-view case, only
the loss term of the  objective function needs to be changed to $\left
\| \mbf{Y}- \tilde{\mbf{F}}\mbf{D}_{\lambda}\mbf{Q} \right \|_F^2$ where 
$\tilde{\mbf{F}} = \circ_{d=1}^{n_d}\mbf{X}_d \mbf{P}_d^{T}$.

\subsection{Classification problems}

In a classification problem, binary or multi-class, the functions connecting
the input and the output are generally discontinuous. 
To learn discontinuous functions the polynomial function $f(\mbf{x})=
\sum_{t=1}^{n_t}\lambda_{t}
\prod_{d=1}^{n_d}\braket{\mbf{p}^{(t)}_d,\mbf{x}}$ can be embedded
into a differentiable activation function $h: \mathbb{R} \rightarrow
\mathbb{R}$. By applying the chain rule it only adds a scalar factor
to the gradients computed for the polynomials. This type of activation
function can be applied component-wise in the vector valued prediction
as well.  

In a classification problem applying the logistic function, the LTR can
be embedded into  logistic regression by defining the conditional
probabilities as a smooth activation  
\begin{equation}
P(y=1|x)=\dfrac{1}{1+e^{-f(\mbf{x})}}
\end{equation}
where the original linear form is replaced with a multi-linear one.
Then the maximum likelihood problem corresponding to the extended
logistic regression can be straightforwardly solved by similar
gradient descent approach described for the least square regression case.

\section{Multi-layered model}

A multi-layered version of LTR can handle the trade-off between the complexity of
the polynomial function and the approximation error, more the rank
less the loss but the generalization performance might deteriorate.

The method presented here can be interpreted as a variant
of the general {\it gradient boosting} framework
\cite{Friedman99stochasticgradient}. Each problem related to a range
of ranks might be taken as a weak learner acting on the residue
remained after the aggregation of the approximation computed by the
previous learners.        

In this extension, the rank range $1,\dots,n_t$ are cut into blocks. 
Let $n_{1},\dots,n_{N_b}$ be a partition of the rank range
$[1,n_t]$ into disjoint index intervals, i.e. $n_t=\sum_{b=1}^{N_b}
n_{b}$. Let $n_{\le b} = \sum_{\tau=1}^{b} n_{\tau}$, and $n_{\le 0}=0$. With these notations the multi-layered algorithm can constructed.

\begin{enumerate}
\item Let $b=1$, and $\mbf{Y}_b=\mbf{Y}$. 
\item Initialize $\mbf{P}_d^{(b)}=\mbf{P}_d[n_{\le b-1}:n_{\le b},:]$ for
all $d=1,\dots,n_d$, 

$\mbf{Q}^{(b)}=\mbf{Q}[n_{\le b-1}:n_{\le b},:]$,
and $\sbf{\lambda}_b= \sbf{\lambda}[n_{\le b-1}:n_{\le b}]$.

\item Let $\mbf{F}_b= \circ_{d=1}^{n_d}\mbf{X}\mbf{P}_d^{(b)}$. 
\item Solve
\begin{equation*}
\begin{array}{ll}
\min \dfrac{1}{2m n_y}\left \| \mbf{Y}_b- \mbf{F}_b\mbf{D}_{\lambda_b}\mbf{Q}^{(b)}
\right \|_F^2 
+ \dfrac{C_{p}}{2n_{t_b} n_d n}\sum_{d=1}^{n_d}||\mbf{P}_d^{(b)}||_F^2 +
\dfrac{C_{q}}{2n_{t_b}n_y}||\mbf{Q}^{(b)}||_F^2\\   
\text{w.r.t.}\ 
\mbf{D}_{\lambda},\ \mbf{Q}^{(b)},\ \mbf{P}_1^{(b)},\dots,\mbf{P}_{n_d}^{(b)}.
\end{array}
\end{equation*}
\item Deflate output $\mbf{Y}_{b+1}=\mbf{Y}_b-
\mbf{F}_b\mbf{D}_{\lambda_b}\mbf{Q}^{(b)}$ 
\item Let $b=b+1$, If $b\le N_b$ Then Go To Step 2.
\item Provide function $\displaystyle f(\mbf{X})=\sum_{b=1}^{n_b}
\mbf{F}_b\mbf{D}_{\lambda_b}\mbf{Q}^{(b)}$. 
\end{enumerate}

The optimization problem in Step 4 can also be solved on a sequence of
mini-batches to keep the input source related memory complexity low.  



\begin{figure}[htbp]
\begin{center}
\begin{tabular}{cc}
  \includegraphics[width=3in,height=1.2in]{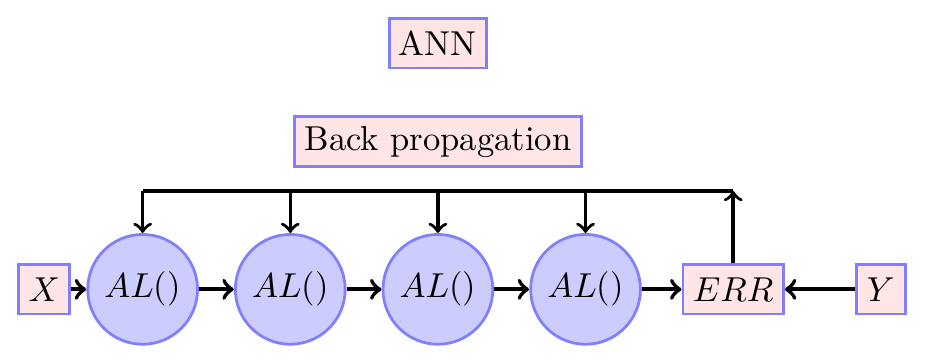}
& 
  \includegraphics[width=3in,height=1.2in]{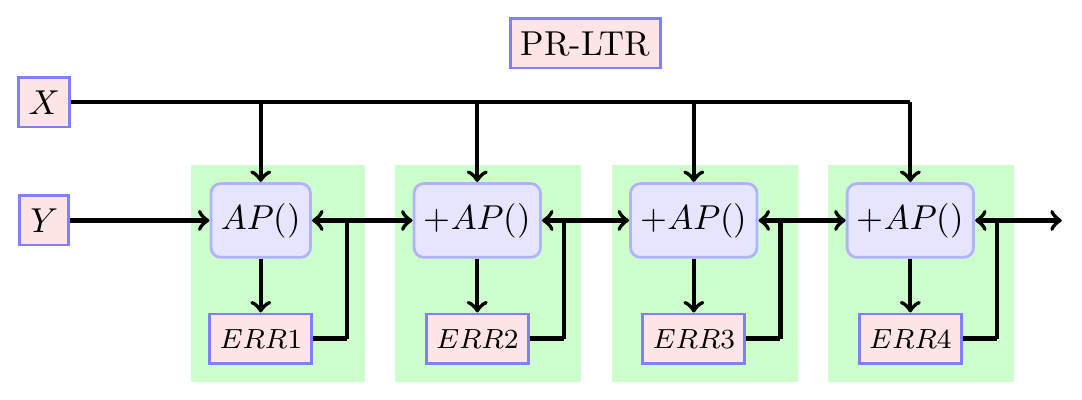}
\end{tabular}
\end{center}
\caption{A simple comparison of the layering strategy of a generic Neural Network and the LTR. The LTR reuses the input in every layer, the output is deflated, only the errors are propagated to the next layer, and error feedback is applied only within the layers. AL() denotes activation+linear layers in the ANN work flow, AP() stands for the activation+polynomial layers in the LTR case.}
\end{figure}




We could test that adding a new layer of ranks to those
accumulated earlier is a reliable step. By assuming that the output
is scalar valued then the reliability could be
measured by the {\it correlation ratio (intraclass correlation)},
\cite{Renyi2007}. 
Adding a new layer $t$ to increase the ranks might matter 
if the change in the output values highly correlate to
the sum of those computed by the previous ones. Let $y_{b,i}$ be
the predicted output at sample example $i$ in layer $b$. Then the
correlation ratio is given by
\begin{equation}
\displaystyle \eta^2= \dfrac{\sigma_b^2 n (\bar{y}_b-\bar{y})^{2}}{ \sum_{b=1}^{n_b}\sum_{i=1}^m
(y_{b,i}-\bar{y})^2},    
\end{equation}
where $\displaystyle \bar{y}_b=\frac{1}{m}\sum_{i=1}^{m}y_{b,i}$ is the layer wise
mean, and $\displaystyle \bar{y}= \frac{1}{n_b}\sum_{b=1}^{n_b}\bar{y}_b$ is the
total mean computed on all layers.



\section{Experiments}

\label{sec:experiments}

The experiments contain two main parts: in the first one, tests based
on random generated data sets are presented and in the second part,
results of experiments on real data sets are performed.      


The first collection of tests is generated on simple, two-variable
$x,y$, polynomials, see Table \ref{tbl:simplepoly_results} to
demonstrate the basic difference between LTR and the FM. Whilst LTR
uses unrestricted polynomials to fit the FM applies symmetric ones,
e.g. $\pi(x,y)=xy$, which condition limits the potential 
performance if the polynomial is arbitrary, e.g. anti-symmetric:
$x^2-y^2$.       

\begin{table}[htbp]
\begin{center}
\begin{tabular}{l|r|r|r|r}
& \multicolumn{4}{c}{Pearson-correlation (RMSE)} \\
Function & LR & KRR & FM & LTR \\ \hline
$xy$          & .01(1.00) & 1.0(.05) &1.00(.00)  & 1.0(.01) \\ 
$x^2-2xy+y^2$ & .04(2.80) & 1.0(.14) & .69(2.00) & 1.0(.02) \\ 
$x^2-y^2$     & .06(2.01) & 1.0(.09) & .05(2.01) & 1.0(.04) \\ 
\end{tabular}
\end{center}
\caption{ Learning simple quadratic functions where the input is generated independently from standard normal distribution {\it number of folds=5, m=1000, degree=2, rank=2, epoch=10}}
\label{tbl:simplepoly_results}
\end{table}

To test the general performance of the proposed learning method,
samples of randomly generated polynomials are used. The generation
procedure is based on  \ref{eq:function_dot_product}), where the
components of data vectors, $\mbf{x}_i$, the parameter vectors,
$\mbf{p}_d^{(t)}$, and rank-wised scalar factors, $\lambda_t$, are
chosen independently from standard normal distribution. This sample
could contain symmetric and non-symmetric polynomials as well. In the
experiments, 2-fold cross validation is used to keep the test size high
relative to that of the training, thus the distances between the test
examples to the closest training ones can be sufficiently large. The
results, the mean values, and standard errors are computed on all
random examples and at each random sample on all folds.  

In all tests, only one parameter runs through on a range of possible values
while all other ones are fixed. The data related parameters
which can influence the performance are: the degree and the rank of
the generated polynomials, the number of examples and the number of 
variables. The parameters of the learner are: the degree ($n_d$) and
the rank of the tensor ($n_t$) to be reconstructed, and the number of
epochs. Other learner parameters are fixed in all experiments,
learning speed is $1$, the size of the mini-batches is taken as $500$,
the regularization constants, $C_{p}$ and $C_{q}$ are fixed to
$10^{-5}$.  
An additional parameter, the noise level, is also used. The noise is
generated from Gaussian distribution with zero mean and the standard
deviation chosen as the standard deviation of the values of the random
polynomials multiplied with the noise parameter. All parameters
employed in the experiments are reported on the corresponding figures. 

The accuracy is measured by Pearson Correlation between the predicted
and the real values.  

\begin{figure}[t]
\begin{center}
\includegraphics[width=4in,height=2.5in]{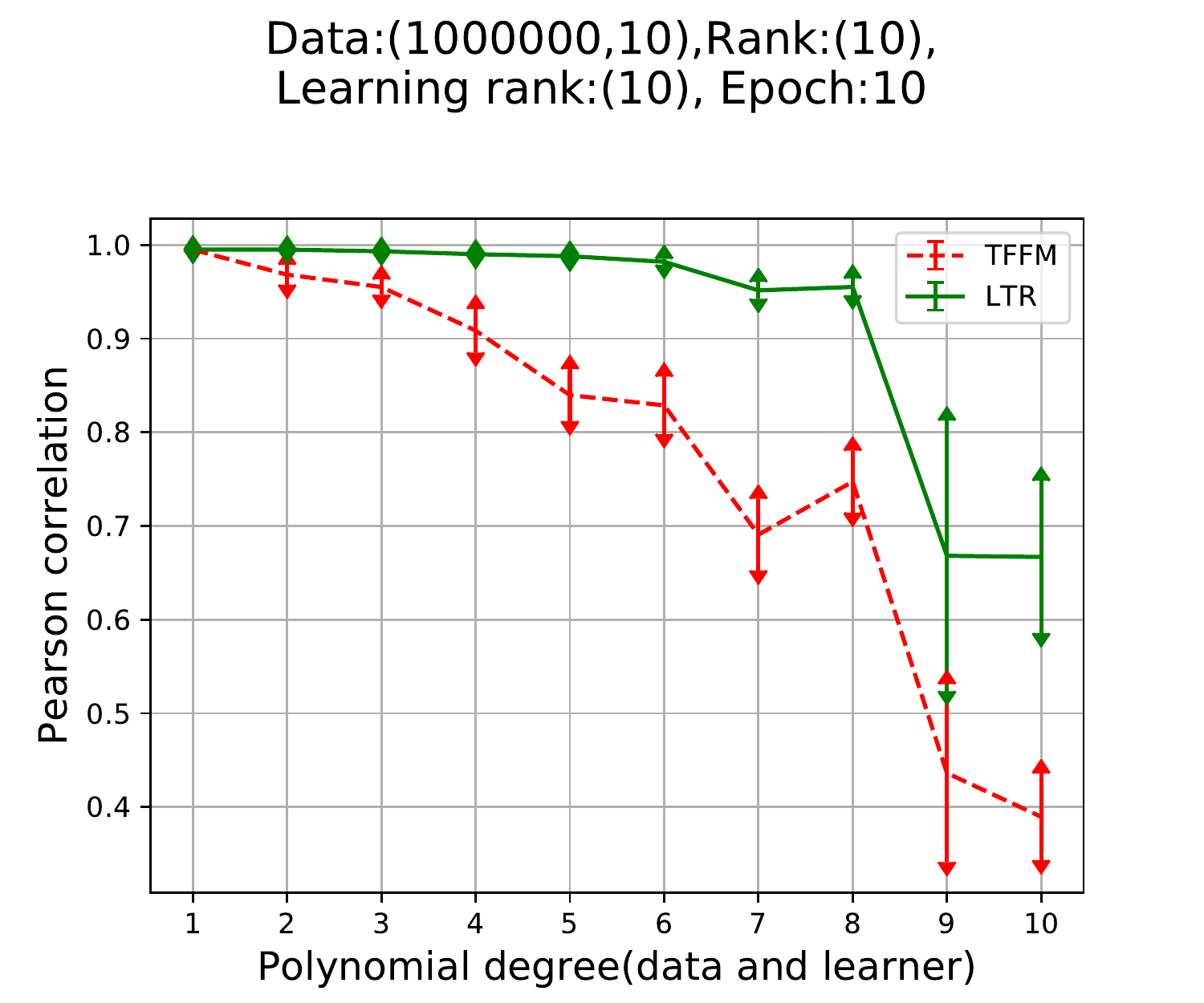}
\end{center}
\caption{Accuracy of the prediction as function of the degree of the data and learner ($n_d$) polynomials. Other parameters were fixed to $m=10^6, n=10$ and $n_t=10$.}
\label{fig_random_poly_1000000_degree_pcorr}
\end{figure}

\begin{figure}[htbp]
\begin{center}
\includegraphics[width=4in,height=2.5in]{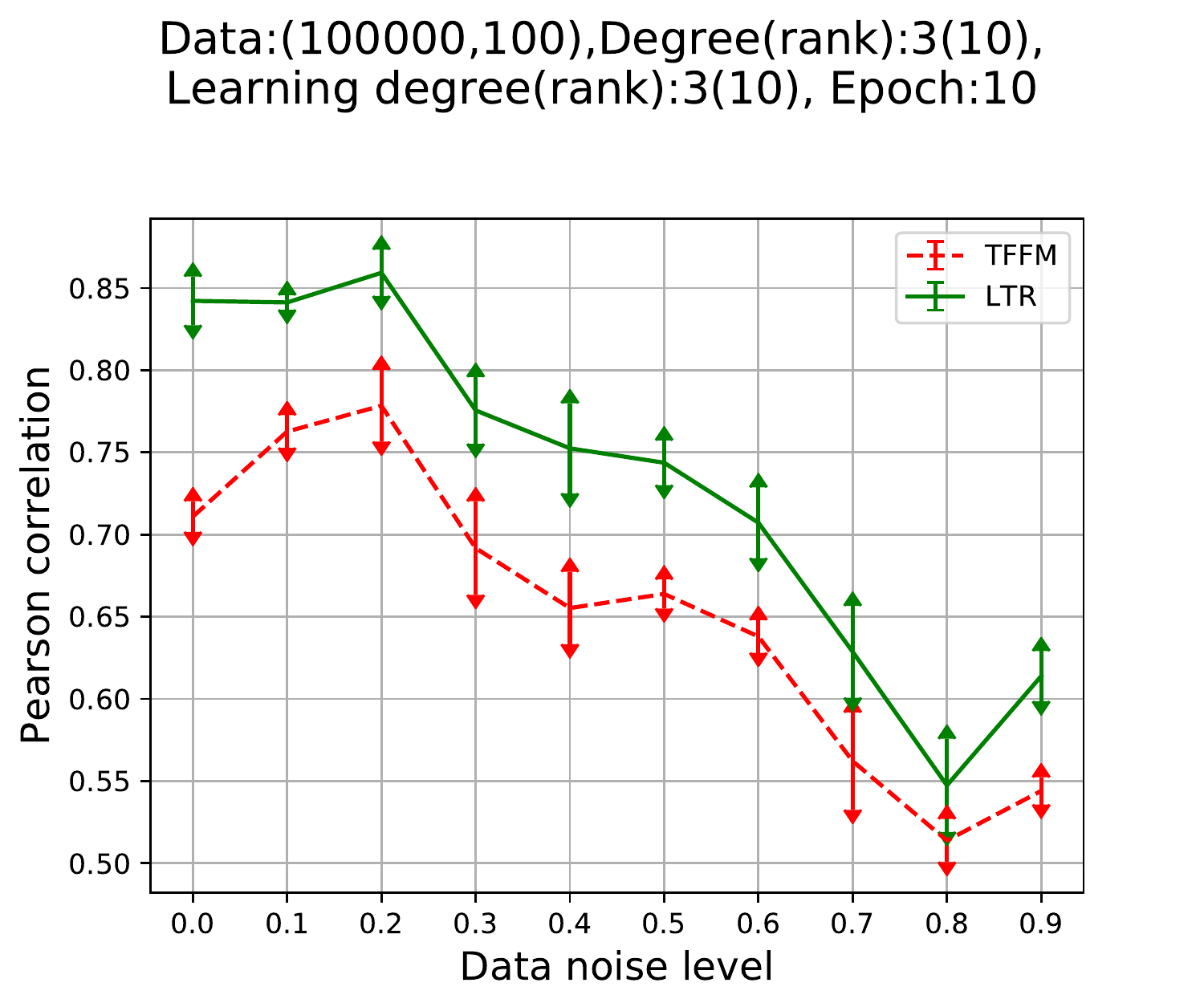}
\end{center}
\caption{Accuracy of the prediction as a function of the noise level. Other parameters were fixed to $m=10^5, n=10$, $n_d=3$ and $n_t=3$.}
\label{fig_random_poly_100000_noise_pcorr}
\end{figure}

\begin{figure}[htbp]
\begin{center}
\includegraphics[width=4in,height=2.5in]{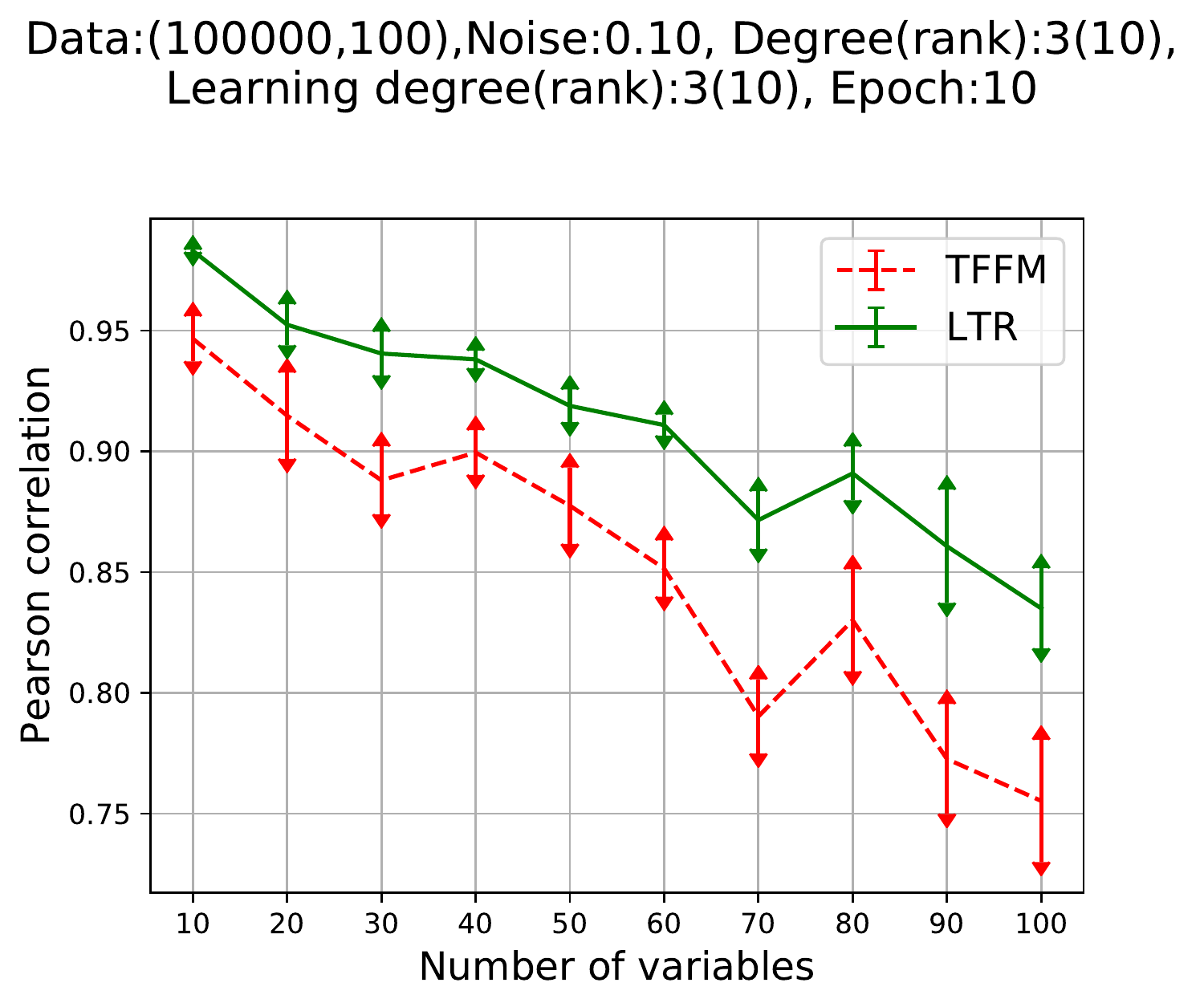}
\end{center}
\caption{Accuracy of the prediction as function of the number of variables. Other parameters were fixed to $m=10^5$, $n_d=3$ and $n_t=10$, noise level $=0.1$.}
\label{fig_random_poly_100000_variables_pcorr}
\end{figure}

\begin{figure}[t]
\begin{center}
\includegraphics[width=4in,height=2.5in]{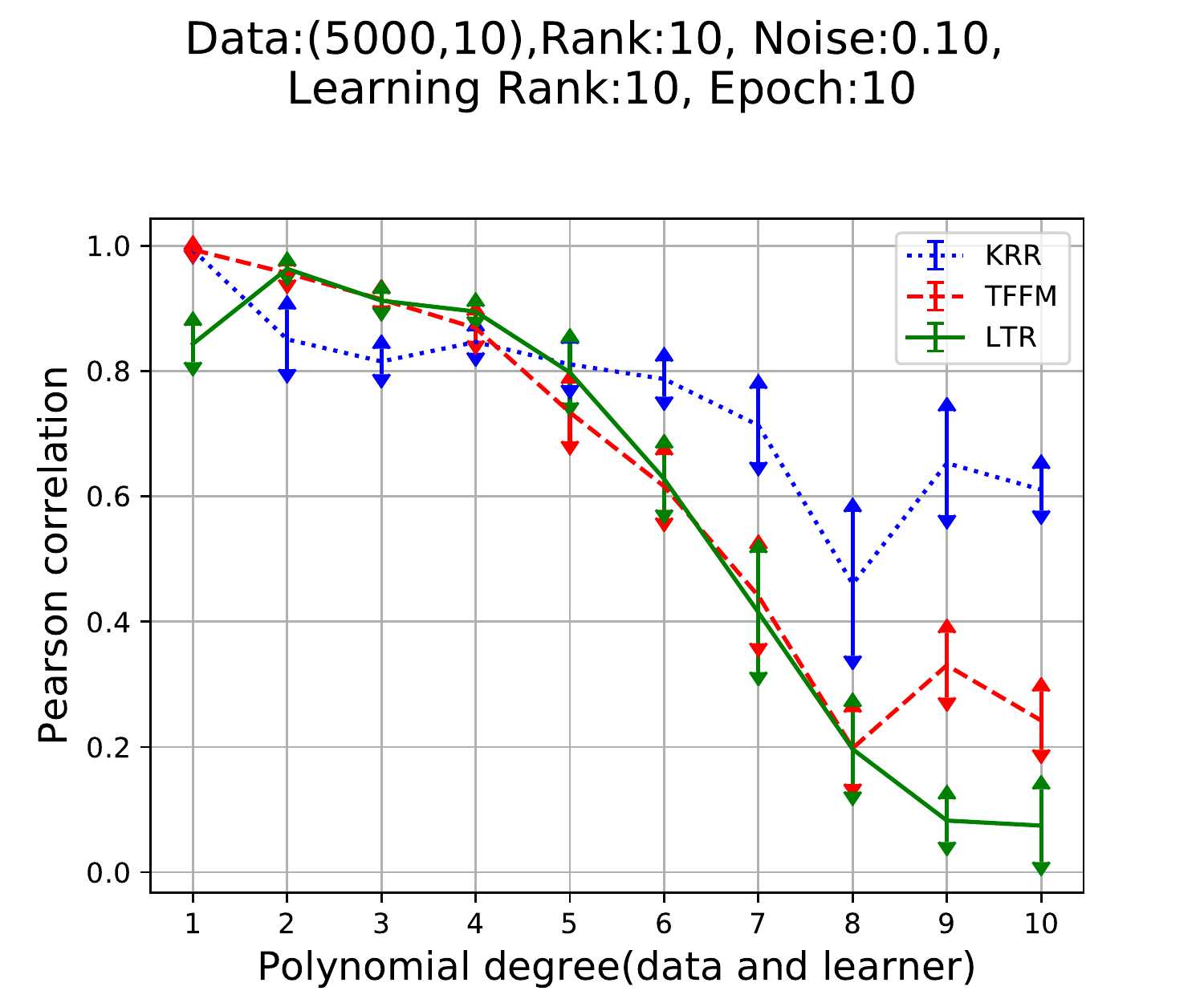}
\end{center}
\caption{Accuracy of the methods on a relatively small data set ($m=5000$) as a function polynomial degree. Other parameters were fixed to $n=10$, $n_d=10$ and $n_t=10$.}
\label{fig_random_poly_5000_degree}
\end{figure}

The methods compared are Kernel Ridge Regression with polynomial
kernels (KRR), High Order Factorization Machine implemented on
TensorFlow (TFFM) \cite{trofimov2016}, Linear Regression (LR), and the
proposed Latent Tensor Reconstruction method (LTR). In all
experiments, every method applies the same polynomial parameters, degree and
rank. In case of the TFFM the mini-batch size is chosen as the same as used by
the LTR. The methods are implemented in Python with the help of
Numpy package, except the FM where the model also uses the
TensorFlow. The tests were run on a HP-Z230 workstation equipped with 8
CPU's and 32GB memory. GPU has not been exploited in the experiments.    

\begin{figure}[htbp]
\begin{center}
\includegraphics[width=4in,height=2.5in]{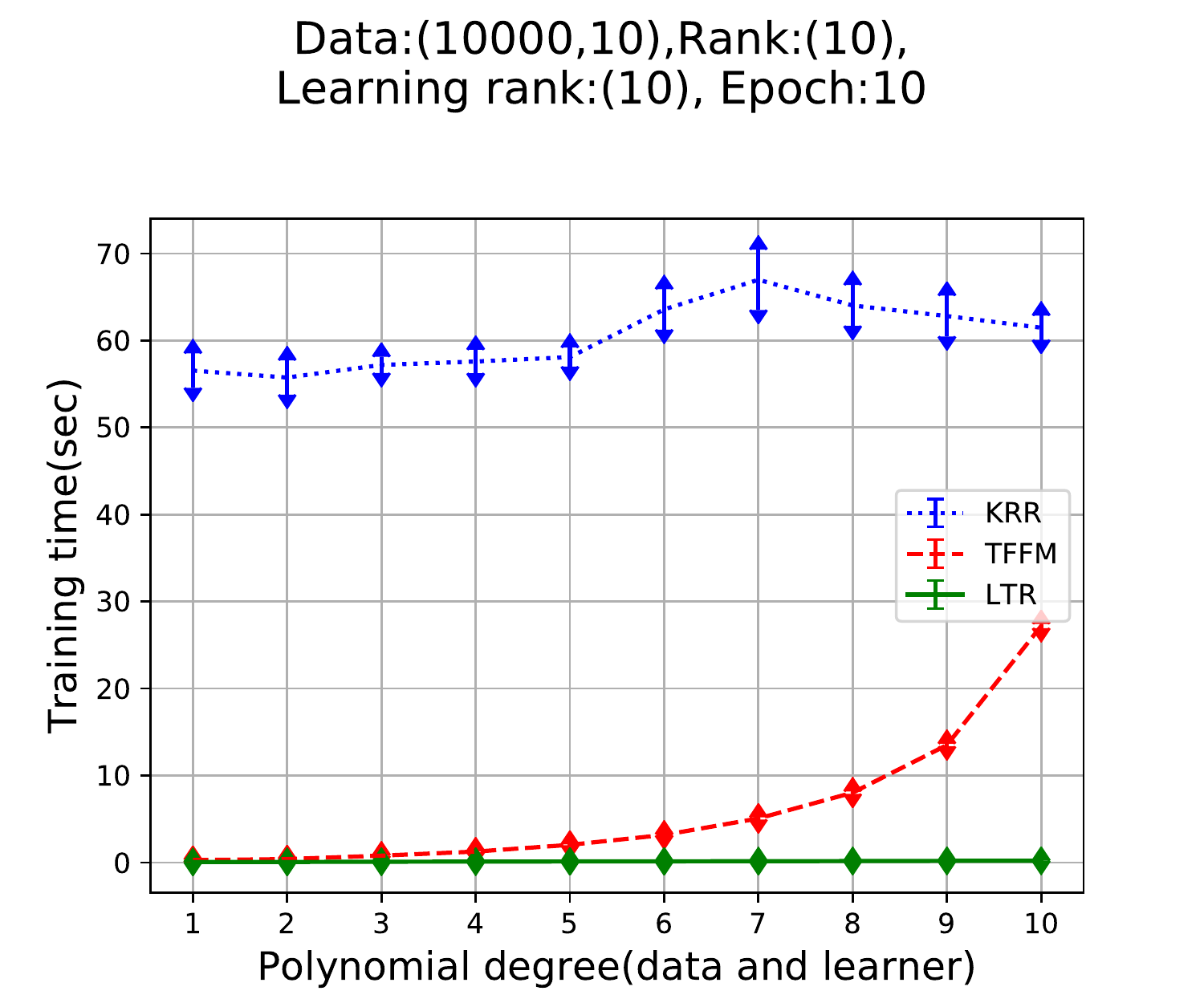}
\end{center}
\caption{The average execution time of the training to learn polynomials on random samples including Kernel Ridge Regression as a function of polynomial degree $(n_d)$. Other parameters were fixed to $m=10^4, n=10$, $n_d=10$ and $n_t=10$.}
\label{fig_random_poly_10000_degree_time}
\end{figure}

\begin{figure}[htbp]
\begin{center}
\includegraphics[width=4in,height=2.5in]{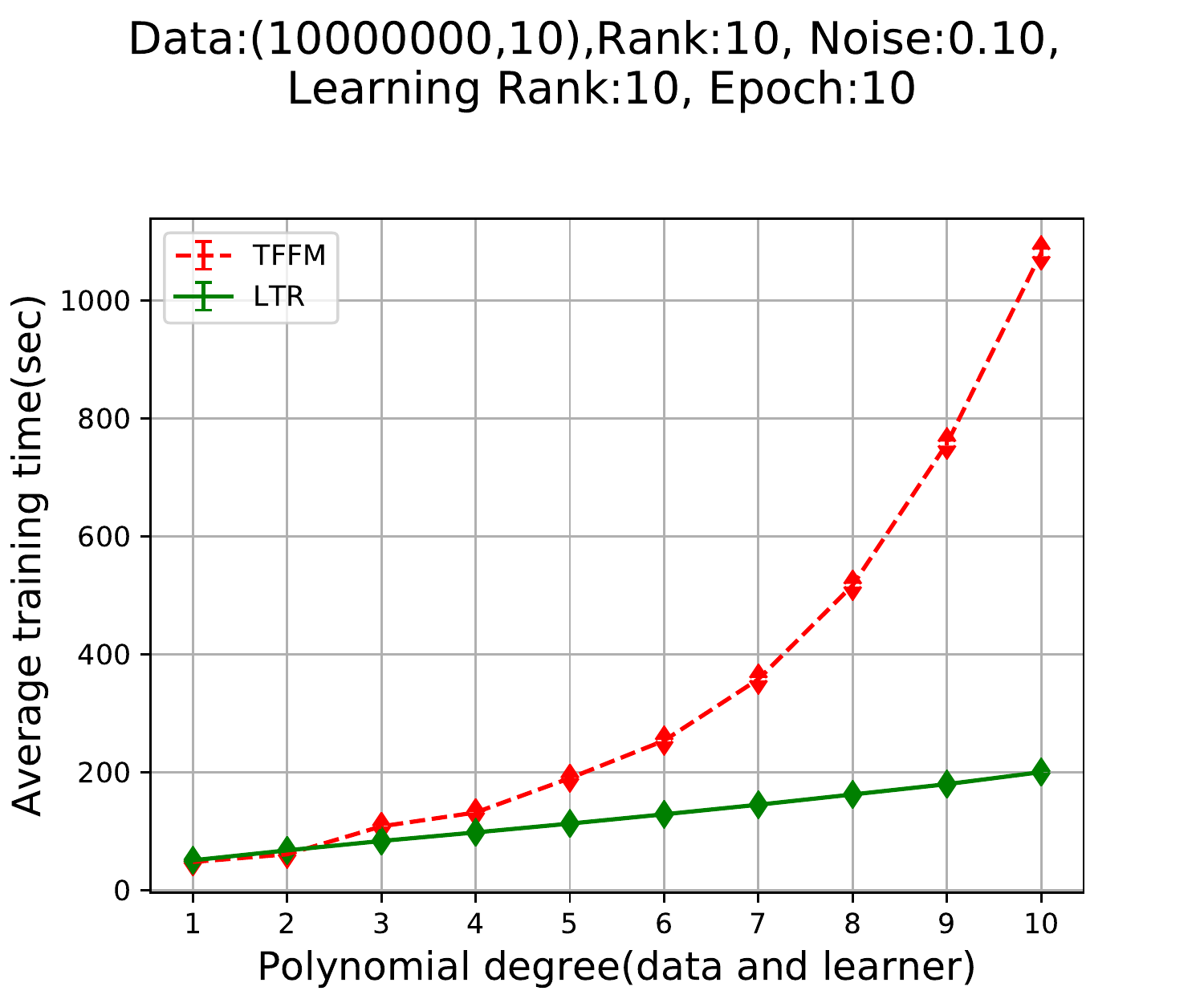}
\end{center}
\caption{The average execution time of the training to learn polynomials on large scale random samples ($m=10^7$) as a function of the polynomial degree ($n_d$). Other parameters were fixed to $n=10$ and $n_t=10$.}
\label{fig_random_poly_10000000_degree_time}
\end{figure}

The second part of the experiments is built on real data sets. 
We processed three data sets,  Corel5k, Espgame and Iaprtc12 \cite{guillaumin2010multiple}. They are available on the authors web site\footnote{http://lear.inrialpes.fr/people/guillaumin/data.php}.  
The published results on these data sets and the corresponding
references are included in Table 
\ref{tab:final_comparison}. These data sets consist of annotated images
and 15 preprocessed image features. The annotations are vectors of
labels identifying the objects appearing on the images. The image features,
e.g. SIFT, are provided by authors of
\cite{guillaumin2010multiple}. The high sparsity of the label vectors and the
strong interrelations between the feature variables make the prediction problem
rather challenging. This test is based on the vector output model
of the LTR presented in Section \ref{sec:vector_valued}.     

\begin{table}
\begin{center}
\begin{tabular}{l|r|rrr}
  &  & \multicolumn{2}{c}{Number of instances } & non-zero \\
Data set   & labels & training & test & labels $(\%)$\\ \hline
Corel5k   & 260 &  4500 &   500 & 1.31 \\ 
Espgame   & 268 & 18689 &  2081 & 1.75 \\
Iaprtc12  & 291 & 17665 &  1962 & 1.97 \\
\end{tabular}
\end{center}
\caption{The size of the data sets, and the number of labels assigned to each image}
\end{table}

\begin{table}[!t]
\begin{center}
\begin{tabular}{l|ccc}
   \multicolumn{4}{c}{Label accuracy in F1($\%$)} \\  
Method & Corel. & Espg. & Iapr. \\ \hline 
MBRM \cite{Feng_2004_CVPR}&  24.0 &  18.0 &  23.0 \\ 
TagProp \cite{guillaumin2009tagprop}        
&  37.0 &  {\bf 32.0} &  {\bf 39.0} \\
JEC \cite{survey}        &  29.0 &  21.0 &  23.0 \\
FastTag \cite{chen_2013_icml}&  37.0 &  30.0 &  34.0 \\ \hline
LTR                           &  {\bf 42.7} &  27.2 &  37.0 \\                
\end{tabular}
\end{center}
\caption{Comparison between of LTR and other
related methods on the three benchmark databases 
}
\label{tab:final_comparison}
\end{table}



\subsection{Summary of the experiments}

Figures \ref{fig_random_poly_1000000_degree_pcorr},
\ref{fig_random_poly_100000_noise_pcorr} and 
\ref{fig_random_poly_100000_variables_pcorr} present the better
accuracies provided by the LTR compared to the FM on varying degrees,
noise levels and on the number of variables, where the data
sets contain $10^5-10^6$ examples. These experiments show that the LTR
has a sufficient capacity to learn highly complex functions on large data
sets at a relatively low number of epochs ($10$). On small sets (Figure
\ref{fig_random_poly_5000_degree}) the large parameter space of both the
LTR and the TFFM leads to inferior performance compared to KRR, especially
when the number of variables is large.         

Figure \ref{fig_random_poly_10000_degree_time} shows that KRR has
a several times larger time complexity than the LTR or the TFFM. The
large scale example, (10M examples), in Figure
\ref{fig_random_poly_10000000_degree_time} demonstrates the
significant  
advantage of the LTR method which has a linear time algorithm in
the degree of the polynomials. That property is the most critical one in
capturing high level interdependence between the variables.


\section{Discussion}

In this paper, a latent tensor reconstruction based 
regression model is proposed for learning highly non-linear target 
functions from large scale data sets. We presented and efficient mini-batch
based algorithm that has a constant memory complexity and linear running time
in all meaningful parameters (degree, rank, sample size, number of
variables). These properties guarantee a high level scalability on
broad range of complex problems. Our experiments
demonstrate high accuracy in learning high-order polynomials from
noisy large-scale data, as well as competitive accuracy in some
real-world data sets.  



\section*{Acknowledgments}

%

This study was supported by the Academy of Finland [Grant No., 
311273, 313266,  
310107, 313268 and 334790].

\bibliography{ltr_arxiv.bib}
\bibliographystyle{unsrt}




\end{document}